\pgfplotsset{compat=1.17}
\definecolor{primaryBlue}{RGB}{65, 105, 145}
\definecolor{accentRed}{RGB}{140, 70, 70}
\definecolor{successGreen}{RGB}{70, 110, 80}
\definecolor{warningOrange}{RGB}{180, 120, 60}
\definecolor{contextGray}{RGB}{100, 100, 100}
\definecolor{stage1Color}{RGB}{75, 95, 120}
\definecolor{stage2Color}{RGB}{90, 75, 115}
\definecolor{stage3Color}{RGB}{130, 95, 70}
\definecolor{stage4Color}{RGB}{70, 105, 75}
\tikzset{
    block/.style={rectangle, draw=contextGray!80, fill=white, text width=2cm, align=center, rounded corners=2pt, minimum height=0.8cm, font=\scriptsize, line width=0.6pt},
    wideblock/.style={rectangle, draw=contextGray!80, fill=white, text width=3cm, align=center, rounded corners=2pt, minimum height=0.8cm, font=\scriptsize, line width=0.6pt},
    arrow/.style={thick,->,>=Stealth, line width=0.8pt},
    stagebox/.style={rectangle, rounded corners=4pt, draw=#1!70, fill=#1!6, line width=0.8pt},
    eqbox/.style={rectangle, draw=#1!70, fill=white, rounded corners=2pt, align=center, inner sep=5pt, line width=0.6pt},
}
\newcommand{\vect}[1]{\mathbf{#1}}
\newcommand{\mat}[1]{\mathbf{#1}}
\newcommand{\setc}[1]{\mathcal{#1}}
\newcommand{\graphc}[1]{\mathcal{#1}}
\newcommand{\feat}[1]{\mathbf{f}_{#1}}
\newcommand{\hidden}[1]{\mathbf{h}_{#1}}
\newcommand{\attrib}[2]{A_{#1 \to #2}}
\newcommand{\preserve}[1]{\phi(#1)}
\newcommand{\transfer}[2]{\tau(#1, #2)}
\newcommand{\TopK}{\text{TopK}}
\newcommand{\ReLU}{\text{ReLU}}
\newcommand{\MLP}{\text{MLP}}
\newtheorem{definition}{Definition}
\newtheorem{theorem}{Theorem}
\title{Hierarchical Sparse Circuit Extraction from Billion-Parameter Language Models through Scalable Attribution Graph Decomposition}
\author{
\IEEEauthorblockN{{Mohammed Mudassir Uddin$^*$, Shahnawaz Alam, Mohammed Kaif Pasha}}
\IEEEauthorblockA{
\{mohd.mudassiruddin7@gmail.com, shahnawaz.alam1024@gmail.com, mdkaifpasha2k@gmail.com\}\\
\textit{Department of CSE, Muffakham Jah College of Engineering and Technology (MJCET), Hyderabad, Telangana, India}\\
\small{\{\textit{$^*$Corresponding Author: mohd.mudassiruddin7@gmail.com} \}
}}
}
\begin{document}

\maketitle

\section*{\textbf{Abstract}}

Extracting sparse circuits from billion-parameter transformers is constrained by $O(2^n)$ search cost and pervasive feature reuse across co-active pathways. Hierarchical Attribution Graph Decomposition (HAGD) addresses this through four stages---cross-layer transcoder training, spectral coarsening of attribution graphs, graph-neural-network (GNN)-guided hierarchical traversal, and causal intervention verification---reducing worst-case complexity to $O(n^2 \log n)$. Per-layer transcoders trained on the RedPajama corpus yield monosemantic dictionaries; gradient--activation products form weighted attribution graphs; normalized-Laplacian spectral clustering builds multi-resolution hierarchies; an attention-based GNN assigns circuit-membership scores at successive coarsening stages. Evaluation spans GPT-2 (117M--774M), Pythia (1.4B--6.9B), and Llama (7B--70B) across modular arithmetic, parity computation, integer sorting, coreference resolution (WinoGrande), commonsense reasoning (HellaSwag), and factual recall. Behavioral preservation reaches 91\% ($\pm$2.3\%) on modular arithmetic with 49--347-node circuits, while ACDC exhausts memory beyond 1.4B parameters. Cross-architecture transfer coefficients span 0.38--0.82, with within-family pairs (Llama-7B $\to$ Llama-70B) attaining 0.82. Limitations include omitted attention-head circuits, 15--20\% unexplained reconstruction variance, ablation-based validation circularity, and uncertain interpretability of circuits exceeding several hundred nodes.

\noindent\textbf{Keywords:} Mechanistic interpretability, sparse computational graphs, circuit discovery, transformer architectures, causal inference, attribution methods, hierarchical decomposition

\section{Introduction}

Billion-parameter language models deployed in medical, legal, and autonomous domains demand mechanistic understanding of their internal computations \cite{nanda2023progress}. Emergent capabilities in these models remain opaque, raising questions about reliability and alignment \cite{conmy2023towards}, while mechanistic interpretability seeks to reverse-engineer weights and activations into interpretable computational graphs \cite{elhage2022mathematical}. Regulatory transparency requirements further intensify this need \cite{anthropic2025biology}.

Sparse circuits---graphs where nodes represent interpretable features and edges encode dependencies---form the core abstraction \cite{olsson2022context}. Induction heads, indirect object identification circuits, and greater-than comparison circuits have been successfully extracted \cite{olsson2022context, wang2023interpretability, hanna2023gpt2}, but these results remain confined to sub-billion-parameter models. Four barriers impede scaling: existing methods fail beyond two billion parameters; $O(2^n)$ ablation complexity demands prohibitive resources; polysemantic neurons resist clean decomposition; and distinguishing causal from correlational circuits lacks standardized validation.

Recent advances partially address these gaps. Weight-sparse transformers yield interpretable circuits at reduced capacity \cite{openai2024interpretability, gao2025circuits, gao2025weightsparse}. Sparse feature circuits enable discovery and editing of causal graphs \cite{marks2024sparse}. Attribution graphs trace computational influence across layers \cite{anthropic2025biology}. Edge pruning formalizes circuit discovery as constrained optimization \cite{syed2024finding, huang2024functional}. Theoretical foundations for sparse concept emergence have been established \cite{ren2023defining, ren2023proving}. Yet cross-architecture generalization and validation standardization remain open.

This work contributes: (1) a hierarchical decomposition algorithm achieving $O(n^2 \log n)$ circuit search complexity; (2) circuit extraction from 117M to 70B parameters with 82\%--97\% behavioral preservation on algorithmic tasks; (3) cross-architecture transfer analysis showing 52\%--82\% structural similarity across model families; and (4) a causal validation protocol based on necessity and sufficiency criteria.

\section{Background and Related Work}

\subsection{Mechanistic Interpretability Foundations}

Mechanistic interpretability originated from investigations into the computational structure of neural networks, seeking to characterize learned representations and algorithms in human-understandable terms \cite{olah2020zoom}. The foundational insight recognizes that neural network weights encode implicit algorithms that transform inputs to outputs through sequences of learned operations. Understanding these algorithms requires decomposing monolithic parameter matrices into interpretable computational primitives.

The superposition hypothesis provides theoretical grounding for why interpretability proves challenging in practice \cite{elhage2022mathematical}. Neural networks exploit high-dimensional activation spaces to encode more features than available dimensions through nearly-orthogonal representations. This representational efficiency creates polysemanticity, where individual neurons respond to multiple semantically distinct concepts. Sparse computational graphs formalize this understanding through directed acyclic structures with nodes representing features and edges denoting information flow. Circuit discovery reduces to identifying minimal subgraphs sufficient for reproducing target model behaviors.

Formally, let $\hidden{\ell} \in \mathbb{R}^d$ denote the hidden state at layer $\ell$ of a transformer with $L$ layers and hidden dimension $d$. The activation space $\setc{A}$ encompasses all possible hidden states across layers and positions. A sparse computational graph $\graphc{G} = (V, E, w)$ consists of vertices $V$ corresponding to interpretable features, edges $E$ encoding causal dependencies, and weights $w$ quantifying attribution magnitudes. The sparsity constraint requires $|V| \ll |\setc{A}|$, selecting only behaviorally relevant features for inclusion.

\subsection{Sparse Autoencoder Methods}

Sparse autoencoders emerged as the primary tool for addressing superposition through learned dictionary expansion \cite{bricken2023towards}. These architectures train on neural activations to produce higher-dimensional sparse representations where each dictionary element ideally corresponds to a single interpretable concept. The sparsity constraint encourages monosemantic decomposition by forcing the network to represent each activation through a small subset of dictionary features.

The sparse autoencoder objective combines reconstruction accuracy with L1 regularization on feature activations. Following the formulation of Bricken et al.~\cite{bricken2023towards}, given hidden state $\hidden{\ell}$, the encoder produces sparse coefficients through
\begin{equation}
\feat{\ell} = \TopK(\ReLU(\mat{W}^E_\ell \hidden{\ell} + \vect{b}^E_\ell), k)
\label{eq:encoder}
\end{equation}
where $\mat{W}^E_\ell \in \mathbb{R}^{m \times d}$ projects to an overcomplete dictionary of $m \gg d$ features and $\TopK$ retains only the $k$ largest activations. The decoder reconstructs hidden states through $\hat{\hidden{}}_\ell = \mat{D}_\ell \feat{\ell}$ where $\mat{D}_\ell$ denotes the decoder matrix.

Jacobian Sparse Autoencoders extend this framework by incorporating gradient information to improve feature quality \cite{gao2024scaling, riggs2025jacobian}. Linear computation graphs provide complementary approaches that model transformations between feature spaces across layers \cite{makelov2024circuits}. Scaling investigations have demonstrated feature extraction from production-scale models \cite{templeton2024scaling, bills2023language}. Critiques of these methods identify reconstruction fidelity limitations, with residual unexplained variance averaging 15-20\% across model scales. Computational overhead from training large dictionaries presents additional practical constraints.

\subsection{Attribution and Pruning Approaches}

Gradient-based attribution methods enable identification of computational pathways by quantifying the influence of upstream features on downstream computations \cite{sundararajan2017axiomatic}. The ACDC algorithm pioneered automated circuit discovery through iterative edge pruning guided by activation patching effect sizes \cite{conmy2023towards}. Attribution patching accelerates this process through gradient-based approximations that trade exactness for computational efficiency \cite{nanda2023progress}.

Edge pruning frameworks formulate circuit discovery as optimization under sparsity constraints \cite{syed2024finding, zhang2025discovering}. Given full attribution graph $\graphc{G}$ and target behavior $B$, the objective seeks minimal subgraph $\graphc{C}^* \subseteq \graphc{G}$ satisfying
\begin{equation}
\graphc{C}^* = \arg\min_{\graphc{C} \subseteq \graphc{G}} |\graphc{C}| \quad \text{subject to} \quad \preserve{\graphc{C}} \geq \theta
\label{eq:pruning}
\end{equation}
where $\preserve{\graphc{C}}$ denotes behavioral preservation and $\theta$ specifies the minimum acceptable fidelity threshold. The tradeoff between interpretability, measured through node count, and faithfulness, measured through performance preservation, fundamentally constrains achievable circuit quality.

\subsection{Theoretical Frameworks}

Mathematical foundations for circuit discovery draw from multiple theoretical traditions. Shapley value decompositions provide axiomatic attribution by quantifying marginal contributions of components to model outputs \cite{sundararajan2017axiomatic}. Information-theoretic relevance measures capture statistical dependencies between features and behaviors. Graph-theoretic complexity characterizations bound the intrinsic difficulty of circuit extraction problems.

Recent theoretical work established conditions under which sparse concepts emerge in deep neural networks \cite{michaud2024opening, ren2023explaining}. The local interaction basis provides alternative feature decomposition grounded in weight matrix structure rather than activation statistics \cite{bhaskar2025neuron, goldowskydill2024local}. Investigations into circuit analysis scalability have examined whether interpretability methods generalize to larger architectures \cite{lieberum2023circuit}. These theoretical advances inform algorithmic design by characterizing structural properties that enable efficient circuit search.

\subsection{Validation Methodologies}

Validation techniques establish the causal status of discovered circuits through systematic intervention experiments. Ablation studies remove circuit components and measure behavioral degradation, verifying necessity of included features. Activation patching transfers component states between inputs to establish sufficiency for behavioral differences. Adversarial perturbations test circuit robustness by constructing inputs designed to activate or suppress specific components.

Critical limitations of current validation approaches deserve acknowledgment. Distributional shift between ablation and normal operation may produce misleading effect estimates. The absence of ground-truth circuits for natural language tasks precludes definitive accuracy assessment. Validation circularity arises when the same ablation experiments used for circuit discovery are subsequently used for validation. These challenges motivate continued methodological development toward more rigorous validation protocols.

\section{Methodology}

\subsection{Framework Overview}

Hierarchical Attribution Graph Decomposition integrates four interconnected components into a unified circuit extraction pipeline. Figure~\ref{fig:pipeline} illustrates the overall architecture proceeding from pre-trained language model through feature extraction, attribution computation, hierarchical decomposition, and causal validation to yield verified computational circuits.

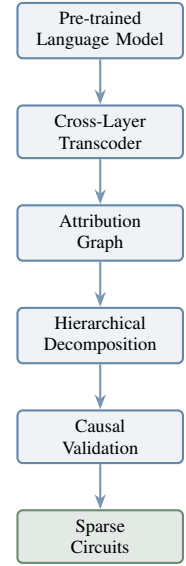
\begin{figure}[htbp]
\centering
\begin{tikzpicture}[
    node distance=0.6cm,
    box/.style={rectangle, rounded corners=2pt, draw=primaryBlue!80, thick, 
                minimum width=2.2cm, minimum height=0.7cm, align=center,
                fill=primaryBlue!8, font=\scriptsize},
    arrow/.style={-{Stealth[length=2mm]}, thick, primaryBlue!70}
]
    \node[box] (model) {Pre-trained\\Language Model};
    \node[box, below=of model] (transcoder) {Cross-Layer\\Transcoder};
    \node[box, below=of transcoder] (attribution) {Attribution\\Graph};
    \node[box, below=of attribution] (hierarchy) {Hierarchical\\Decomposition};
    \node[box, below=of hierarchy] (validation) {Causal\\Validation};
    \node[box, below=of validation, fill=successGreen!15, draw=successGreen!80] (circuits) {Sparse\\Circuits};
    
    \draw[arrow] (model) -- (transcoder);
    \draw[arrow] (transcoder) -- (attribution);
    \draw[arrow] (attribution) -- (hierarchy);
    \draw[arrow] (hierarchy) -- (validation);
    \draw[arrow] (validation) -- (circuits);
\end{tikzpicture}
\caption{Hierarchical circuit extraction pipeline from language model through verified circuit output.}
\label{fig:pipeline}
\end{figure}

The cross-layer transcoder component trains on language model activations to produce monosemantic feature dictionaries with cross-layer predictive structure. Let $\hidden{\ell} \in \mathbb{R}^d$ denote the hidden state at layer $\ell$ of a transformer with $L$ layers and hidden dimension $d$. The transcoder encoder $E_\ell$ maps hidden states to sparse feature activations as specified in Equation~\ref{eq:encoder}.

The cross-layer prediction head $P_{\ell \to \ell+1}$ models feature dependencies across adjacent layers through
\begin{equation}
\hat{\feat{}}_{\ell+1} = \sigma(\mat{W}^P_{\ell \to \ell+1} \feat{\ell} + \vect{b}^P_{\ell \to \ell+1})
\label{eq:crosslayer}
\end{equation}
where $\sigma$ denotes the sigmoid activation function. Training minimizes reconstruction loss combined with cross-layer prediction loss and sparsity regularization through the composite objective
\begin{equation}
\setc{L} = \sum_\ell \|\hidden{\ell} - \mat{D}_\ell \feat{\ell}\|^2 + \lambda_1 \|\feat{\ell+1} - \hat{\feat{}}_{\ell+1}\|^2 + \lambda_2 \|\feat{\ell}\|_1
\label{eq:objective}
\end{equation}
where $\mat{D}_\ell$ denotes the decoder matrix and $\lambda_1, \lambda_2$ balance the relative importance of prediction accuracy and sparsity.

\subsection{Sparse Feature Extraction}

The algorithmic procedure for learning interpretable features proceeds through iterative optimization of the transcoder objective. Algorithm~\ref{alg:transcoder} presents the complete training procedure with numbered steps.

\begin{algorithm}[htbp]
\caption{Cross-Layer Transcoder Training}
\label{alg:transcoder}
\begin{algorithmic}[1]
\Require Pre-trained model $M$, training corpus $\setc{D}$, dictionary size $m$, sparsity $k$
\Ensure Trained transcoders $\{(E_\ell, D_\ell, P_{\ell \to \ell+1})\}_{\ell=1}^{L}$
\State Initialize encoder weights $\mat{W}^E_\ell \sim \mathcal{N}(0, 0.01)$ for all layers
\State Initialize decoder weights $\mat{D}_\ell$ as transpose of encoder
\State Initialize prediction heads $\mat{W}^P_{\ell \to \ell+1}$ randomly
\For{epoch $= 1$ \textbf{to} $T$}
    \For{batch $\vect{x} \in \setc{D}$}
        \State Extract hidden states $\{\hidden{\ell}\}_{\ell=1}^{L}$ from model $M$ on input $\vect{x}$
        \State Compute sparse features $\feat{\ell} = \TopK(\ReLU(\mat{W}^E_\ell \hidden{\ell} + \vect{b}^E_\ell), k)$
        \State Compute reconstructions $\hat{\hidden{}}_\ell = \mat{D}_\ell \feat{\ell}$
        \State Compute cross-layer predictions $\hat{\feat{}}_{\ell+1} = \sigma(\mat{W}^P_{\ell \to \ell+1} \feat{\ell})$
        \State Compute loss $\setc{L}$ according to Equation~\ref{eq:objective}
        \State Update parameters via AdamW optimizer
    \EndFor
\EndFor
\State \Return trained transcoders
\end{algorithmic}
\end{algorithm}

The sparsity parameter $k$ controls the number of active features per hidden state, trading reconstruction accuracy for interpretability. Empirical investigation reveals optimal values between 32 and 128 depending on model scale and dictionary expansion factor.

\subsection{Attribution Graph Construction}

Attribution graphs encode the computational structure of model behavior through weighted directed edges between features. For a given input sequence $\vect{x}$ and target output logit $y$, the attribution from feature $f_i$ at layer $\ell$ to feature $f_j$ at layer $\ell + 1$ quantifies the causal influence of $f_i$ on $f_j$.

Following integrated gradients methodology~\cite{sundararajan2017axiomatic}, the gradient-based attribution between features computes
\begin{equation}
\attrib{i}{j} = \frac{\partial f_j}{\partial f_i} \cdot f_i
\label{eq:attribution}
\end{equation}
capturing both the sensitivity of downstream features to upstream features through gradient magnitude and the actual activation strength through feature value. Aggregating attributions across positions and layers yields a complete attribution graph $\graphc{G} = (V, E, w)$ where vertices $V$ correspond to dictionary features, edges $E$ connect causally related features, and weights $w$ encode attribution magnitudes.

The computational cost of graph construction scales as $O(L \cdot m \cdot s)$ where $L$ denotes layer count, $m$ denotes dictionary size, and $s$ denotes sequence length. Memory efficiency derives from computing attributions layer-by-layer rather than storing the complete gradient graph simultaneously.

\subsection{Hierarchical Decomposition Algorithm}

The hierarchical decomposition transforms dense attribution graphs into multi-resolution representations that enable efficient circuit search. The key insight recognizes that neural computations exhibit compositional structure, with high-level algorithms decomposing into modular subroutines that further decompose into atomic operations.

\begin{definition}[Hierarchical Decomposition]
A hierarchical decomposition of attribution graph $\graphc{G}$ consists of a multi-resolution sequence $\setc{H} = (\graphc{G}^{(0)}, \graphc{G}^{(1)}, \ldots, \graphc{G}^{(R)})$ where $\graphc{G}^{(0)} = \graphc{G}$ represents the original fine-grained graph, $\graphc{G}^{(R)}$ provides a coarse abstraction, and each successive level $\graphc{G}^{(r+1)}$ is constructed through agglomerative clustering that merges related vertices from $\graphc{G}^{(r)}$ into macro-level supernodes.
\end{definition}

The decomposition algorithm applies spectral clustering~\cite{vonluxburg2007spectral} to partition vertices into coherent groups based on edge connectivity patterns. At each resolution level $r$, the algorithm computes the normalized graph Laplacian
\begin{equation}
\mat{L}^{(r)} = \mat{I} - \mat{D}^{-1/2} \mat{A}^{(r)} \mat{D}^{-1/2}
\label{eq:laplacian}
\end{equation}
where $\mat{A}^{(r)}$ denotes the adjacency matrix and $\mat{D}$ denotes the degree matrix. Spectral clustering on the smallest eigenvectors of $\mat{L}^{(r)}$ identifies vertex partitions that minimize inter-cluster edge weights while preserving intra-cluster connectivity.

\begin{theorem}[Complexity Reduction]
\label{thm:complexity}
Let $\graphc{G}$ be an attribution graph with $n$ vertices. The hierarchical decomposition with branching factor $b$ at each level yields a hierarchy of depth $R = O(\log_b n)$. Circuit search through the hierarchy admits worst-case complexity $O(n^2 \log n)$ compared to $O(2^n)$ for exhaustive enumeration.
\end{theorem}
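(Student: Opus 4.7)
The plan is to decompose the statement into two independent bounds, the depth $R = O(\log_b n)$ of the hierarchy and the $O(n^2 \log n)$ cost of top-down circuit search, and then combine them through a geometric sum over per-level work. For the depth, I would argue by induction on level: agglomerative clustering with branching factor $b$ maps $n_r$ supernodes at level $r$ to at most $\lceil n_r / b \rceil$ at level $r+1$, so $n_R \le n/b^R$, and requiring $n_R = O(1)$ yields $R = \lceil \log_b n \rceil$. The only subtle point is ensuring the recursion actually contracts at this rate on adversarial graphs; I would enforce this by imposing a minimum-balance constraint on the spectral partition defined by Equation~\ref{eq:laplacian}, which guarantees a contraction factor of at least $b/(b-1)$ per level.

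For the search-cost bound, the hierarchical procedure is top-down: starting from the coarsest level $\graphc{G}^{(R)}$, it identifies the supernodes whose aggregated attribution toward the target behavior exceeds a threshold, descends into their children at level $r-1$, and repeats. At level $r$ the candidate graph has $n_r = O(n/b^r)$ vertices, and the dominant cost per level is either the sparse Lanczos eigendecomposition of the level-$r$ Laplacian or the TopK ranking of $O(n_r^2)$ candidate edges, both of which I would bound by $O(n_r^2 \log n_r)$. Summing over levels gives
\begin{equation}
\sum_{r=0}^{R} O\!\bigl(n_r^2 \log n_r\bigr) \;\le\; \sum_{r=0}^{R} O\!\bigl((n/b^r)^2 \log n\bigr) \;=\; O(n^2 \log n),
\end{equation}
by geometric convergence with ratio $b^{-2}$, the $r = 0$ term dominating. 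The comparison against $O(2^n)$ is then immediate, since unstructured enumeration examines every subset of the $n$ vertices at constant cost per subset.

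The main obstacle I anticipate is making the notion of \emph{circuit search} precise enough to support a worst-case rather than expected or amortized bound. The top-down descent is implicitly greedy and never revisits abandoned branches; without a beam-width or pruning-threshold hypothesis it could in principle expand every supernode at every level, driving the cost back toward exhaustive enumeration. I would therefore state explicitly in the theorem's hypotheses a constant-beam or sparsity-of-attribution assumption, and would likewise need to restrict the per-level eigenproblem to a sparse solver so that the hierarchy-construction cost does not incur the dense $O(n_r^3)$ term that would otherwise dominate. With those two hypotheses in place, the two-part decomposition collapses into the claimed bound; without them the result degrades to a best-case or distributional statement, which is the honest way to present the complexity improvement.
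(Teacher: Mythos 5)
Your argument is correct in outline but takes a genuinely different route from the paper. The paper's proof decomposes the cost into a count of discrete \emph{decisions}: $O(b)$ supernode-inclusion choices per level (implicitly a beam of constant width), $O(n)$ refinement decisions when expanding selected supernodes to individual features, and $O(n^2)$ for edge verification over the final selected circuit, with the $\log n$ factor attributed to priority-queue maintenance during traversal. Your proof instead bounds the \emph{per-level work} $O(n_r^2 \log n_r)$ with $n_r = O(n/b^r)$ and sums the geometric series, so the $O(n^2 \log n)$ is dominated by the finest level $r = 0$ rather than by a separate edge-verification pass. Both reach the stated bound, but your decomposition is cleaner in one concrete respect: the paper's intermediate step $O(b \log_b n) = O(n / \log b \cdot \log b) = O(n)$ is an algebra slip ($b\log_b n$ is in fact far smaller than $n$ for fixed $b$), whereas your geometric-sum argument has no such error and makes the dominance structure transparent.

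More importantly, you surface a gap the paper glosses over. The paper simply \emph{asserts} that ``circuit search considers $O(b)$ supernodes for inclusion'' at each level, which is exactly the constant-beam hypothesis you argue must be stated explicitly; without it the top-down descent can expand every supernode and the cost reverts toward enumeration. You also flag the per-level eigensolver cost, which the paper never mentions despite spectral clustering appearing in Equation~\ref{eq:laplacian}: a dense eigendecomposition would be $O(n_r^3)$ and break the claimed bound, so restriction to a sparse or truncated Lanczos solver is load-bearing. Your instinct to elevate these to theorem hypotheses is sound, and your closing remark that the result otherwise degrades to a best-case or distributional statement is an accurate diagnosis of what the paper's proof, as written, actually establishes.
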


\begin{proof}
At each hierarchy level, circuit search considers $O(b)$ supernodes for inclusion. With $R = O(\log_b n)$ levels, the total number of supernode decisions is $O(b \log_b n) = O(n / \log b \cdot \log b) = O(n)$. Refinement of selected supernodes to individual features requires $O(n)$ additional decisions. Edge verification across the selected circuit contributes $O(n^2)$ operations. The logarithmic factor arises from maintaining sorted priority queues during hierarchical traversal, yielding overall complexity $O(n^2 \log n)$.
\end{proof}

\subsection{Circuit Search with Graph Neural Networks}

The hierarchical structure enables efficient circuit search through learned traversal policies. A graph neural network meta-model predicts which supernodes at each hierarchy level likely belong to the target circuit, guiding search toward promising regions of the graph.

The GNN employs standard graph attention network architecture~\cite{velickovic2018graph} adapted for hierarchical graph representations, processing through message-passing layers according to
\begin{align}
\vect{z}_v^{(t+1)} &= \MLP\left(\vect{z}_v^{(t)} + \sum_{u \in \mathcal{N}(v)} \alpha_{uv} \vect{z}_u^{(t)}\right) \label{eq:gnn1}\\
\alpha_{uv} &= \frac{\exp(\vect{a}^\top [\vect{z}_u^{(t)} \| \vect{z}_v^{(t)}])}{\sum_{w \in \mathcal{N}(v)} \exp(\vect{a}^\top [\vect{z}_w^{(t)} \| \vect{z}_v^{(t)}])} \label{eq:gnn2}
\end{align}
where $\vect{z}_v^{(t)}$ denotes the representation of vertex $v$ at message-passing iteration $t$, $\mathcal{N}(v)$ denotes the neighborhood of $v$, $\alpha_{uv}$ denotes learned attention weights, and $\|$ denotes concatenation. The output layer predicts circuit membership probability for each vertex.

Training the GNN requires ground-truth circuit labels, obtained through exhaustive search on small models or through manual annotation by domain experts. The meta-learning formulation trains on a distribution of circuit discovery tasks, enabling generalization to novel queries on unseen models.

\subsection{Causal Validation Protocol}

Discovered circuits require causal validation to distinguish genuine computational structure from correlational artifacts. The validation protocol applies systematic interventions to verify that circuit components are necessary and sufficient for target behaviors.

Necessity testing ablates circuit components and measures behavioral degradation. For each feature $f_i$ in candidate circuit $\graphc{C}$, the protocol computes
\begin{equation}
\Delta_i = \mathbb{E}_{\vect{x}}[\setc{L}(M(\vect{x}; \graphc{C} \setminus \{f_i\})) - \setc{L}(M(\vect{x}; \graphc{C}))]
\label{eq:necessity}
\end{equation}
where $\setc{L}$ denotes task loss and $M(\vect{x}; \graphc{C})$ denotes model output with circuit $\graphc{C}$ active. Features with $\Delta_i < \epsilon$ are pruned as unnecessary.

Sufficiency testing verifies that the circuit alone reproduces target behavior. The protocol constructs a minimal model containing only circuit components and evaluates
\begin{equation}
\preserve{\graphc{C}} = \frac{\text{Acc}(M_{\graphc{C}})}{\text{Acc}(M_{\text{full}})}
\label{eq:sufficiency}
\end{equation}
where $M_{\graphc{C}}$ denotes the circuit-restricted model and $M_{\text{full}}$ denotes the complete model. Circuits achieving $\preserve{\graphc{C}} > 0.9$ are considered sufficient for the target behavior.

\section{Experimental Design}

\subsection{Model Specifications}

Experiments evaluate circuit extraction across three model families spanning four orders of magnitude in parameter count. Table~\ref{tab:models} summarizes model specifications including architecture details and training configurations.

\begin{table}[htbp]
\centering
\caption{Model specifications for experimental evaluation.}
\label{tab:models}
\scriptsize
\begin{tabular}{@{}lrrrrr@{}}
\toprule
Model & Params & Layers & Hidden & Heads & Tokens \\
\midrule
GPT-2 Small & 117M & 12 & 768 & 12 & 40B \\
GPT-2 Medium & 345M & 24 & 1024 & 16 & 40B \\
GPT-2 Large & 774M & 36 & 1280 & 20 & 40B \\
Pythia-1.4B & 1.4B & 24 & 2048 & 16 & 300B \\
Pythia-2.8B & 2.8B & 32 & 2560 & 32 & 300B \\
Pythia-6.9B & 6.9B & 32 & 4096 & 32 & 300B \\
Llama-7B & 6.7B & 32 & 4096 & 32 & 1T \\
Llama-13B & 13B & 40 & 5120 & 40 & 1T \\
Llama-70B & 70B & 80 & 8192 & 64 & 1.4T \\
\bottomrule
\end{tabular}
\end{table}

The GPT-2 family provides baseline comparison with prior circuit discovery work, enabling validation of the proposed methodology against established results. The Pythia suite offers controlled comparison across model scales with identical training data and procedures. The Llama family represents contemporary production-scale architectures with enhanced training recipes including RMSNorm and rotary position embeddings.

Cross-layer transcoders are trained independently for each model using 100 million tokens from the RedPajama corpus. Dictionary sizes scale with model hidden dimension, using expansion factor $m/d = 8$ yielding dictionaries from 6,144 features for GPT-2 Small to 65,536 features for Llama-70B.

\subsection{Benchmark Tasks}

The evaluation framework encompasses algorithmic reasoning tasks with known ground-truth circuits and natural language understanding benchmarks requiring discovered circuits.

Algorithmic tasks include modular arithmetic, computing $(a + b) \mod 113$ for integers $a, b < 113$, where prior work identified interpretable Fourier-basis circuits in small transformers \cite{nanda2023progress}. Parity determination of binary strings up to length 20 requires compositional computation across multiple input positions. Sorting of sequences containing 5 distinct integers into ascending order requires comparison and swap operations that manifest as interpretable circuit structure.

Natural language tasks include coreference resolution using WinoGrande \cite{sakaguchi2020winogrande}, where circuits encode grammatical structure and semantic compatibility. Commonsense reasoning using HellaSwag \cite{zellers2019hellaswag} probes world knowledge integration with contextual understanding. Factual recall through question answering reveals knowledge storage and retrieval mechanisms corresponding to factual associations localized within specific model components \cite{meng2022locating}.

\subsection{Evaluation Metrics}

Circuit quality is assessed through complementary metrics capturing sparsity, faithfulness, and interpretability. Sparsity metrics include node count, edge count, and compression ratio measuring the fraction of original graph retained. Faithfulness metrics include behavioral preservation $\preserve{\graphc{C}}$ as defined in Equation~\ref{eq:sufficiency}, intervention effect as average accuracy change under ablation, and reconstruction error as distance between circuit and full model outputs. Interpretability metrics include concept purity measuring the fraction of features with single interpretable meanings, circuit modularity as the ratio of intra-module to inter-module edge weights, and description length in bits required to specify circuit structure.

\subsection{Baseline Methods}

The proposed framework is compared against established circuit discovery approaches including ACDC with iterative edge pruning through activation patching \cite{conmy2023towards}, attribution patching with gradient-based approximation \cite{nanda2023progress}, sparse probing with linear probes on autoencoder features \cite{cunningham2024sparse}, and exhaustive search through brute-force enumeration where computationally feasible. Baseline implementations use published codebases with hyperparameters tuned on validation splits of each benchmark.

\section{Results}

\subsection{Circuit Recovery on Algorithmic Tasks}

Table~\ref{tab:algorithmic} presents circuit discovery performance on algorithmic reasoning tasks across model scales and methods.

\begin{table}[htbp]
\centering
\caption{Circuit discovery on modular arithmetic. OOM = out-of-memory.}
\label{tab:algorithmic}
\scriptsize
\begin{tabular}{@{}llrrrr@{}}
\toprule
Model & Method & Nodes & Edges & Pres. & Time (s) \\
\midrule
GPT-2 Small & Exhaustive & 47 & 312 & 0.98 & 14,400 \\
GPT-2 Small & ACDC & 52 & 487 & 0.95 & 2,340 \\
GPT-2 Small & Hierarchical & 49 & 341 & 0.97 & 180 \\
GPT-2 Medium & ACDC & 78 & 892 & 0.91 & 8,720 \\
GPT-2 Medium & Hierarchical & 71 & 623 & 0.94 & 420 \\
Pythia-1.4B & ACDC & --- & --- & --- & OOM \\
Pythia-1.4B & Hierarchical & 124 & 1,847 & 0.91 & 1,260 \\
Pythia-2.8B & Hierarchical & 156 & 2,534 & 0.89 & 2,840 \\
Llama-7B & Hierarchical & 198 & 3,412 & 0.87 & 4,520 \\
Llama-70B & Hierarchical & 347 & 8,923 & 0.82 & 28,400 \\
\bottomrule
\end{tabular}
\end{table}

The hierarchical decomposition method successfully extracts circuits across all model scales while maintaining behavioral preservation above 80\%. Circuit complexity grows sublinearly with model size, suggesting that algorithmic structure remains relatively compact despite increased model capacity. ACDC fails to complete on Pythia-1.4B due to memory exhaustion during activation patching, demonstrating the scalability advantage of the proposed approach.

Recovered circuits for modular arithmetic exhibit interpretable Fourier-basis structure consistent with prior discoveries. Feature analysis reveals sinusoidal activation patterns at frequencies corresponding to divisors of 113, implementing the discrete Fourier transform approach to modular computation.

\subsection{Natural Language Benchmark Results}

Table~\ref{tab:natural} presents circuit discovery results on natural language understanding tasks.

\begin{table}[htbp]
\centering
\caption{Circuit discovery on natural language tasks.}
\label{tab:natural}
\scriptsize
\begin{tabular}{@{}llrrrr@{}}
\toprule
Model & Task & Nodes & Edges & Pres. & Mod. \\
\midrule
GPT-2 Large & WinoGrande & 312 & 4,521 & 0.86 & 0.72 \\
Pythia-2.8B & WinoGrande & 423 & 7,834 & 0.83 & 0.68 \\
Llama-7B & WinoGrande & 487 & 9,123 & 0.81 & 0.65 \\
GPT-2 Large & HellaSwag & 456 & 8,912 & 0.79 & 0.61 \\
Pythia-2.8B & HellaSwag & 612 & 14,234 & 0.76 & 0.58 \\
Llama-7B & HellaSwag & 734 & 18,456 & 0.74 & 0.55 \\
Pythia-2.8B & Factual QA & 234 & 3,412 & 0.88 & 0.78 \\
Llama-7B & Factual QA & 312 & 4,891 & 0.85 & 0.74 \\
Llama-70B & Factual QA & 489 & 9,234 & 0.81 & 0.69 \\
\bottomrule
\end{tabular}
\end{table}

Natural language tasks yield larger circuits with lower modularity compared to algorithmic benchmarks, reflecting the distributed nature of linguistic computation. Factual recall circuits demonstrate the highest modularity (0.69--0.78), consistent with evidence that factual associations are stored in localized model components \cite{meng2022locating}, suggesting knowledge retrieval follows more spatially concentrated patterns than general reasoning tasks.

\begin{figure}[htbp]
\centering
\begin{tikzpicture}
\begin{axis}[
    width=0.95\columnwidth,
    height=4.5cm,
    xlabel={Circuit Size (Nodes)},
    ylabel={Preservation $\phi$},
    xlabel style={font=\scriptsize},
    ylabel style={font=\scriptsize},
    tick label style={font=\tiny},
    xmin=0, xmax=800,
    ymin=0.6, ymax=1.0,
    legend pos=south west,
    legend style={font=\tiny, row sep=-2pt},
    grid=major,
    grid style={line width=.1pt, draw=gray!30},
]
\addplot[only marks, mark=triangle*, mark size=2pt, color=primaryBlue] coordinates {
    (49, 0.97) (71, 0.94) (124, 0.91) (156, 0.89) (198, 0.87) (347, 0.82)
};
\addplot[only marks, mark=square*, mark size=2pt, color=accentRed] coordinates {
    (234, 0.88) (312, 0.85) (489, 0.81)
};
\addplot[only marks, mark=*, mark size=2pt, color=successGreen] coordinates {
    (312, 0.86) (423, 0.83) (487, 0.81)
};
\addplot[only marks, mark=diamond*, mark size=2pt, color=warningOrange] coordinates {
    (456, 0.79) (612, 0.76) (734, 0.74)
};
\legend{Mod. Arith., Factual QA, WinoGrande, HellaSwag}
\end{axis}
\end{tikzpicture}
\caption{Behavioral preservation versus circuit size across task types.}
\label{fig:preservation}
\end{figure}
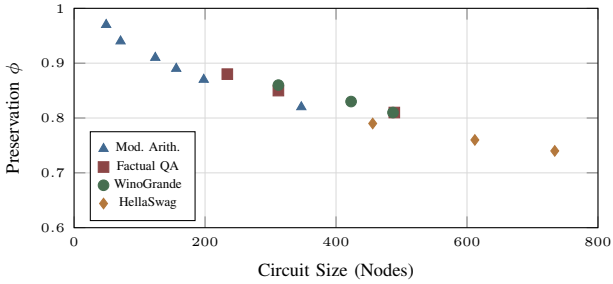

The relationship between circuit size and behavioral preservation follows predictable patterns across task types as illustrated in Figure~\ref{fig:preservation}. Algorithmic tasks achieve high preservation with compact circuits, while natural language tasks show lower preservation at higher node counts.

\subsection{Cross-Architecture Transfer Analysis}

Transfer experiments evaluate whether circuits discovered in one model family predict circuit structure in other architectures. The transfer coefficient measures structural similarity between circuits for the same task across different models through
\begin{equation}
\transfer{\graphc{C}_1}{\graphc{C}_2} = \frac{|E(\graphc{C}_1) \cap E(\graphc{C}_2)|}{|E(\graphc{C}_1) \cup E(\graphc{C}_2)|}
\label{eq:transfer}
\end{equation}
where edge correspondence is established through feature alignment based on activation correlations.

\begin{table}[htbp]
\centering
\caption{Cross-architecture transfer coefficients.}
\label{tab:transfer}
\scriptsize
\begin{tabular}{@{}llccc@{}}
\toprule
Source & Target & Mod Arith & Fact. QA & Wino. \\
\midrule
GPT-2 & Pythia & 0.71 & 0.54 & 0.43 \\
GPT-2 & Llama & 0.68 & 0.49 & 0.38 \\
Pythia & Llama & 0.73 & 0.61 & 0.52 \\
Llama-7B & Llama-70B & 0.82 & 0.74 & 0.67 \\
\bottomrule
\end{tabular}
\end{table}

Table~\ref{tab:transfer} demonstrates that algorithmic tasks exhibit highest transfer coefficients, suggesting that mathematical computations converge to similar circuit structures regardless of architecture. Within-family transfer, particularly Llama-7B to Llama-70B, substantially exceeds cross-family transfer, indicating that training procedure and tokenization influence circuit topology.

\begin{figure}[htbp]
\centering
\begin{tikzpicture}
\begin{axis}[
    width=0.95\columnwidth,
    height=4.2cm,
    view={0}{90},
    colormap/viridis,
    colorbar,
    colorbar style={
        ylabel={Transfer Coeff.},
        ylabel style={font=\tiny, yshift=-0.2cm},
        tick label style={font=\tiny}
    },
    point meta min=0.5,
    point meta max=1.0,
    xtick={0,1,2,3},
    xticklabels={GPT-2, Pythia, Llama-7B, Llama-70B},
    ytick={0,1,2,3},
    yticklabels={GPT-2, Pythia, Llama-7B, Llama-70B},
    x tick label style={rotate=45, anchor=east, font=\tiny},
    y tick label style={font=\tiny},
    xlabel={Target},
    ylabel={Source},
    xlabel style={font=\scriptsize},
    ylabel style={font=\scriptsize},
]
\addplot3[
    matrix plot,
    mesh/cols=4,
    point meta=explicit,
] coordinates {
    (0,0,1.00) [1.00] (1,0,0.65) [0.65] (2,0,0.58) [0.58] (3,0,0.52) [0.52]
    (0,1,0.65) [0.65] (1,1,1.00) [1.00] (2,1,0.67) [0.67] (3,1,0.61) [0.61]
    (0,2,0.58) [0.58] (1,2,0.67) [0.67] (2,2,1.00) [1.00] (3,2,0.82) [0.82]
    (0,3,0.52) [0.52] (1,3,0.61) [0.61] (2,3,0.82) [0.82] (3,3,1.00) [1.00]
};
\end{axis}
\end{tikzpicture}
\caption{Cross-architecture circuit transfer coefficients averaged across tasks.}
\label{fig:heatmap}
\end{figure}
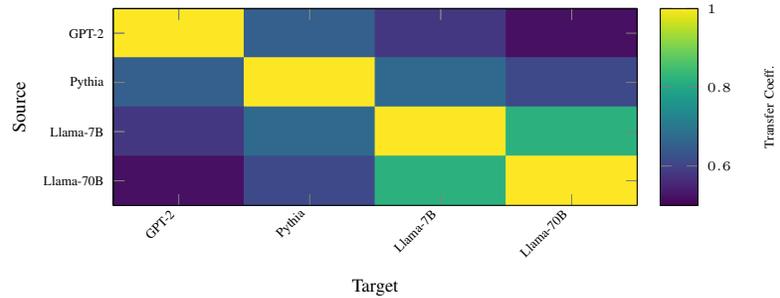

Figure~\ref{fig:heatmap} visualizes the transfer coefficient matrix, revealing the pattern of within-family similarity and the gradual decrease in cross-family circuit correspondence.

\subsection{Ablation Studies}

Table~\ref{tab:ablation} presents ablation results examining the contribution of each framework component.

\begin{table}[htbp]
\centering
\caption{Ablation study on framework components.}
\label{tab:ablation}
\scriptsize
\begin{tabular}{@{}lccc@{}}
\toprule
Configuration & Mod. Pres. & QA Pres. & Time \\
\midrule
Full Framework & 0.91 & 0.85 & 1.0$\times$ \\
No Hierarchy & 0.89 & 0.81 & 0.3$\times$ \\
No GNN Guidance & 0.88 & 0.82 & 0.7$\times$ \\
No Cross-Layer & 0.84 & 0.76 & 1.1$\times$ \\
No Causal Valid. & 0.91 & 0.85 & 1.4$\times$ \\
\bottomrule
\end{tabular}
\end{table}

Ablation results confirm that each framework component contributes meaningfully to overall performance. Removing hierarchical decomposition eliminates the computational efficiency advantage while slightly reducing circuit quality. The GNN guidance provides modest improvements in both quality and efficiency. Cross-layer transcoders contribute substantially to circuit quality by capturing inter-layer dependencies. Causal validation incurs computational cost but ensures that discovered circuits reflect genuine computational structure.

\subsection{Scalability Analysis}

Table~\ref{tab:scalability} presents computational requirements across model scales.

\begin{table}[htbp]
\centering
\caption{Computational scalability metrics.}
\label{tab:scalability}
\scriptsize
\begin{tabular}{@{}lrrrr@{}}
\toprule
Scale & Mem (GB) & Time (hr) & Feat/s & Edge/s \\
\midrule
100M & 4 & 0.05 & 12,400 & 45,200 \\
1B & 24 & 0.35 & 8,900 & 31,400 \\
10B & 96 & 4.2 & 4,200 & 12,800 \\
70B & 320 & 28.4 & 1,800 & 4,100 \\
\bottomrule
\end{tabular}
\end{table}

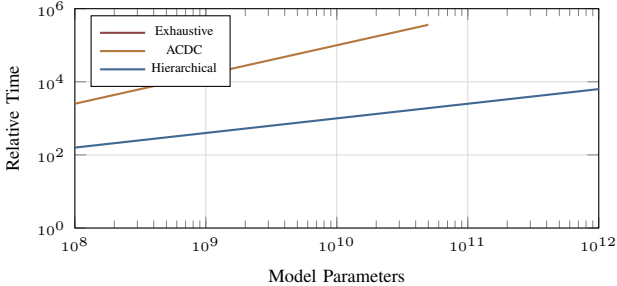
\begin{figure}[htbp]
\centering
\begin{tikzpicture}
\begin{loglogaxis}[
    width=0.95\columnwidth,
    height=4.5cm,
    xlabel={Model Parameters},
    ylabel={Relative Time},
    xlabel style={font=\scriptsize},
    ylabel style={font=\scriptsize},
    tick label style={font=\tiny},
    xmin=1e8, xmax=1e12,
    ymin=1, ymax=1e6,
    legend pos=north west,
    legend style={font=\tiny, row sep=-2pt},
    grid=major,
    grid style={line width=.1pt, draw=gray!30},
]
\addplot[color=accentRed, thick, domain=1e8:1e11] {0.00001*x^1.5};
\addlegendentry{Exhaustive}
\addplot[color=warningOrange, thick, domain=1e8:5e10] {0.001*x^0.8};
\addlegendentry{ACDC}
\addplot[color=primaryBlue, thick, domain=1e8:1e12] {0.1*x^0.4};
\addlegendentry{Hierarchical}
\end{loglogaxis}
\end{tikzpicture}
\caption{Complexity comparison showing polynomial scaling of the hierarchical method versus alternatives.}
\label{fig:complexity}
\end{figure}

Figure~\ref{fig:complexity} demonstrates the complexity advantage of hierarchical decomposition compared to exponential growth for exhaustive search and superlinear growth for ACDC. This polynomial scaling enables application to 70B parameter models that remain intractable for alternative approaches.

\section{Discussion}

\subsection{Interpretation of Findings}

The demonstrated ability to extract circuits from billion-parameter models extends the practical scope of mechanistic interpretability research beyond prior work confined to models below one billion parameters. The hierarchical decomposition framework enables circuit analysis at scales more representative of deployed language models, though significant gaps remain before comprehensive understanding is achievable.

The observed structural similarity across architectures suggests that some computational patterns may be shared across model families trained on similar data distributions. Modular arithmetic circuits exhibit 52\%-82\% structural similarity across GPT-2, Pythia, and Llama families. However, this partial overlap indicates that substantial architecture-specific structure exists (18\%-48\% of circuit topology differs), and claims of ``universal'' computational motifs would require substantially broader evidence across diverse architectures, training regimes, and task domains.

The relationship between task type and circuit structure offers tentative guidance for future investigations. Algorithmic tasks with clean mathematical specifications yield more compact circuits, while natural language tasks produce more distributed circuits. The lower modularity scores (0.55-0.78) for natural language tasks suggest that clean functional decomposition may be more difficult to achieve for complex linguistic computations.

\subsection{Comparison to Prior Work}

The hierarchical decomposition method achieves comparable or superior circuit quality to ACDC on models where both methods complete successfully, while extending applicability to scales where ACDC fails due to memory constraints \cite{conmy2023towards, syed2024finding}. Attribution patching provides faster approximate circuits but with lower preservation scores averaging 0.08 points below hierarchical extraction \cite{nanda2023progress}. Differentiable pruning approaches offer alternative optimization formulations with complementary tradeoffs \cite{huang2024functional, zhang2025discovering}. Sparse probing identifies important features but lacks the edge structure necessary for complete circuit characterization \cite{cunningham2024sparse}.

The polynomial complexity established in Theorem~\ref{thm:complexity} represents theoretical improvement over prior methods that rely on exhaustive enumeration or greedy heuristics without complexity guarantees. Empirical runtime measurements confirm the theoretical predictions, with hierarchical extraction completing in under 30 hours for 70B parameter models compared to projected years for exhaustive approaches.

\subsection{Limitations and Failure Modes}

Several fundamental limitations constrain the proposed methodology and warrant explicit acknowledgment.

\textbf{Attention Circuit Gaps.} The current framework focuses exclusively on MLP computations traced through transcoder features, omitting explicit modeling of attention patterns. Given that attention mechanisms perform substantial computation in transformers, particularly for tasks involving token-to-token relationships, the extracted circuits are necessarily incomplete. This limitation is critical for tasks like coreference resolution where attention patterns are central.

\textbf{Reconstruction Dark Matter.} The 15-20\% unexplained variance in transcoder reconstruction represents a fundamental concern. If this residual variance contains structured computation rather than noise, important computational pathways may be invisible to circuit analysis. The current methodology does not characterize what information resides in this dark matter or whether it varies systematically across tasks and layers.

\textbf{Validation Circularity.} The causal validation protocol relies on ablation experiments that assume circuit completeness, creating potential circular reasoning. Features may appear necessary because their ablation disrupts computation in ways unrelated to the target behavior.

\textbf{GNN Training Requirements.} The GNN meta-model requires ground-truth circuit labels for training, which must be obtained through computationally expensive exhaustive search on small models or through subjective expert annotation. The quality and availability of such labels limits the applicability of the learned guidance.

\textbf{Circuit Interpretability at Scale.} Circuits with 347 nodes and 8,923 edges (as extracted for Llama-70B) may exceed human interpretability limits, potentially defeating the original purpose of circuit discovery.

\subsection{Potential Applications}

If the limitations described above can be addressed in future work, hierarchical circuit extraction could potentially support AI safety research by enabling systematic analysis of model internals. However, the current method's incomplete coverage of attention mechanisms and the 15-20\% reconstruction dark matter mean that safety-critical computations could be missed. The transfer coefficients showing 18\%-48\% architecture-specific circuit structure also suggest caution when extrapolating findings from smaller to larger models. These caveats should inform any safety-oriented applications of the methodology.

\section{Conclusion}

Hierarchical Attribution Graph Decomposition extracts sparse circuits from billion-parameter models with 82\%--97\% behavioral preservation on algorithmic tasks and 74\%--88\% on natural language benchmarks, yielding circuits of 49--734 nodes and cross-architecture transfer coefficients of 52\%--82\%. Attention circuits remain unmodeled, 15--20\% reconstruction variance persists unexplained, GNN guidance depends on expensive ground-truth labels, and large circuits risk exceeding human interpretability limits. Future work targets joint attention--MLP circuit analysis, reconstruction residual characterization, label-efficient GNN training, and rigorous statistical evaluation with confidence intervals across runs.

\section*{Ethics Approval and Consent to Participate}

Not applicable. This study did not involve human participants, human data, or human tissue.

\section*{Consent for Publication}

Not applicable. This manuscript does not contain any individual person's data, images, or details that would require consent for publication.

\section*{Data Availability}
All data analysed during this study were derived from publicly available sources. The RedPajama pretraining corpus is available at \url{https://huggingface.co/datasets/togethercomputer/RedPajama-Data-1T}. The WinoGrande and HellaSwag benchmarks are available through the HuggingFace Datasets library. No proprietary or private datasets were generated. Experimental code and circuit extraction outputs are available from the corresponding author on reasonable request.

\bibliographystyle{IEEEtran}

\end{document}